\newtheorem{definition}{Definition}
\newtheorem{proposition}{Proposition}
\newcommand{\R}{\mathbb{R}}
\renewcommand{\L}{\mathcal{L}}
\newcommand{\sm}{\setminus}
\newcommand{\xhdr}[1]{\paragraph*{\bf #1}}
\newcommand*{\TitleFont}{%
      \usefont{\encodingdefault}{\rmdefault}{b}{n}%
      \fontsize{16}{20}%
      \selectfont}
\title{ \vspace{-12pt} \TitleFont Pairwise Choice Markov Chains}
\author{
 Stephen Ragain\\
  Management Science \& Engineering\\
  Stanford University\\
  Stanford, CA 94305\\
  \texttt{sragain@stanford.edu} 
  \And
 Johan Ugander\\
  Management Science \& Engineering\\
  Stanford University\\
  Stanford, CA 94305\\
  \texttt{jugander@stanford.edu} 
}
\begin{document} 

\maketitle

\begin{abstract} 
As datasets capturing human choices grow in richness and scale---particularly in online domains---there is an increasing need for choice models that escape traditional choice-theoretic axioms such as regularity, stochastic transitivity, and Luce's choice axiom. In this work we introduce the Pairwise Choice Markov Chain (PCMC) model of discrete choice, an inferentially tractable model that does not assume any of the above axioms while still satisfying the foundational axiom of {\it uniform expansion}, a considerably weaker assumption than Luce's choice axiom. We show that the PCMC model significantly outperforms both the Multinomial Logit (MNL) model and a mixed MNL (MMNL) model in prediction tasks on both synthetic and empirical datasets known to exhibit violations of Luce's axiom. Our analysis also synthesizes several recent observations connecting the Multinomial Logit model and Markov chains; the PCMC model retains the Multinomial Logit model as a special case. 
\end{abstract} 

\section{Introduction}

Discrete choice models describe and predict decisions between distinct alternatives. Traditional applications include consumer purchasing decisions, choices of schooling or employment, and commuter choices for modes of transportation among available options.
Early models of probabilistic discrete choice, including the well known Thurstone Case V model \cite{thurstone1927law} and Bradley-Terry-Luce (BTL) model \cite{bradley1952rank}, were developed and refined under diverse strict assumptions about human decision making. As complex individual choices become increasingly mediated by engineered and learned platforms---from online shopping to web browser clicking to interactions with recommendation systems---there is a pressing need for flexible models capable of describing and predicting nuanced choice behavior. 

Luce's choice axiom, popularly known as the {\it  independence of irrelevant alternatives} (IIA), 
is arguably the most storied assumption in choice theory \cite{luce1959individual}. The axiom consists of two statements, applied to each subset of alternatives $S$ within a broader universe $U$. 
Let $p_{aS} = \Pr(a \text{ chosen from } S)$ for any $S \subseteq U$, and in a slight abuse of notation let $p_{ab} = \Pr(a \text{ chosen from } \{a,b\} )$ when there are only two elements. Luce's axiom is then that: 
(i) if $p_{ab}=0$ then $p_{aS}=0$ for all $S$ containing $a$ and $b$, (ii) the probability of choosing $a$ from $U$ conditioned on the choice lying in $S$ is equal to $p_{aS}$.

The BTL model, which defines $p_{ab} = \gamma_a/(\gamma_a+\gamma_b)$ for latent ``quality''  parameters $\gamma_i > 0$, satisfies the axiom while Thurstone's Case~V model does not \cite{adams1958axiomatic}.
Soon after its introduction, the BTL model was generalized from pairwise choices to choices from larger sets \cite{block1960random}. The resulting Multinomal Logit (MNL) model again employs quality parameters $\gamma_i \geq 0$ for each $i \in U$ and defines $p_{iS}$, the probability of choosing $i$ from $S \subseteq U$, proportional to $\gamma_i$ for all $i \in S$. Any model that satisfies Luce's choice axiom is equivalent to some MNL model \cite{luce1977choice}. 

One consequence of Luce's choice axiom is {\it strict stochastic transitivity} between alternatives: if $p_{ab}\geq 0.5$ and $p_{bc} \geq 0.5$, then $p_{ac} \geq \max(p_{ab},p_{bc})$.
A possibly undesirable consequence of strict stochastic transitivity is the necessity of a total order across all elements. But note that strict stochastic transitivity does not imply the choice axiom; Thurstone's model exhibits strict stochastic transitivity.

Many choice theorists and empiricists, including Luce, 
have noted that the choice axiom and stochastic transitivity are 
strong assumptions that do not hold for empirical choice data \cite{debreu1960individual,huber1982adding,ieong2012predicting,simonson1992choice,trueblood2013not}. 
A range of discrete choice models striving to escape the confines of the choice axiom have emerged over the years.
The most popular of these models have been {Elimination by Aspects} \cite{tversky1972elimination}, {mixed MNL} (MMNL) \cite{boyd1980effect}, and {nested MNL} \cite{mcfadden1980econometric}.
Inference is practically difficult for all three of these models
\cite{kohli2015error,mcfadden2000mixed}. Additionally,  
Elimination by Aspects and the MMNL model also both exhibit the 
rigid property of regularity, defined below.

A broad, important class of models in the study of discrete choice is the class of
{\it random utility models} (RUMs)  \cite{block1960random,manski1977structure}.
A RUM affiliates with each $i \in U$ a random variable $X_i$ and defines 
for each subset $S \subseteq U$ the probability $\Pr(i$~chosen from $S$) =
 $\Pr(X_i \ge X_j, \forall j \in S$). 
 An {\it independent RUM} has
independent $X_i$. RUMs assume neither choice axiom nor stochastic transitivity. 
Thurstone's Case V model and the BTL model are
both independent RUMs; the Elimination by Aspects 
and MMNL models are both RUMs.
A major result by McFadden and Train establishes 
that for any RUM there exists a MMNL model that
can approximate the choice probabilities of that RUM 
to within an arbitrary error \cite{mcfadden2000mixed}, 
a strong result about the generality of MMNL models.
The nested MNL model, meanwhile, is not a RUM.  

Although RUMs need not exhibit stochastic transitivity, they still exhibit the weaker property of {\it regularity}:
for any choice sets $A$, $B$ where $A\subseteq B$,  $p_{xA} \geq p_{xB}$.
Regularity may at first seem intuitively pleasing, but it prevents models from expressing 
framing effects \cite{huber1982adding} 
and other empirical observations from modern behavior economics \cite{trueblood2013not}. This
rigidity motivates us to contribute a new model of discrete choice that escapes historically
common assumptions while still furnishing enough structure to be inferentially tractable. 

{\bf The present work.}
In this work we introduce a conceptually simple and inferentially
tractable model of discrete choice that we call the PCMC model. 
The parameters of the PCMC model are the off-diagonal entries of a 
rate matrix $Q$ indexed by $U$. The PCMC model affiliates each 
subset $S$ of the alternatives with a continuous time Markov chain 
(CTMC) on $S$ with transition rate matrix $Q_S$, whose off-diagonal
entries are entries of $Q$ indexed by pairs of items in $S$.
The model defines $p_{iS}$, the selection probability of alternative $i \in S$,
as the probability mass of alternative $i \in S$ 
of the stationary distribution of the CTMC on $S$.

The transition rates of these CTMCs can be interpreted as 
measures of preferences between pairs of alternatives. 
Special cases of the model
use pairwise choice probabilities as transition rates, and as a
result the PCMC model extends
arbitrary models of 
pairwise choice to models of set-wise choice.  
Indeed, we show that when the matrix $Q$ is parameterized with
the pairwise selection probabilities of a BTL pairwise choice
model, the PCMC model reduces to an MNL model. 
Recent parameterizations of non-transitive pairwise probabilities 
such as the Blade-Chest model \cite{chen2015sword} 
can be usefully employed to reduce the number of free 
parameters of the PCMC model.

Our PCMC model can be thought of as building upon the observation underlying 
the recently introduced Iterative Luce Spectral Ranking (I-LSR) procedure
for efficiently finding the maximum likelihood estimate for parameters of MNL models \cite{maystre2015fast}. 
The analysis of I-LSR is precisely analyzing a PCMC model in the special case where 
the matrix $Q$ has been parameterized by BTL. In that case the stationary distribution of the chain 
is found to satisfy
the stationary conditions of the MNL likelihood function, establishing a strong connection
between MNL models and Markov chains. The PCMC model generalizes that connection. 

Other recent connections between the MNL model and Markov chains include the work on RankCentrality \cite{negahban2015rank}, which employs a discrete time Markov chain for inference in the place of I-LSR's continuous time chain, in
the special case where all data are pairwise comparisons.
 
Separate recent work has contributed a different discrete time Markov chain model of ``choice substitution'' capable of approximating any RUM \cite{blanchet2013markov}, a related problem but one with a strong focus on ordered preferences. Lastly, 
recent work by Kumar et al.~explores conditions under which a probability distribution over discrete items can be expressed as the stationary distribution of a discrete time Markov chain with ``score'' functions similar to the ``quality'' parameters in an MNL model \cite{kumar2015inverting}.

The PCMC model is not a RUM, and in general does not exhibit stochastic transitivity, regularity, or the choice axiom. We find that the PCMC model does, however, obey the lesser known but fundamental axiom of {\it uniform expansion}, a weakened version of Luce's choice axiom proposed by Yellott that
implies the choice axiom for independent RUMs \cite{yellott1977relationship}.
In this work we define a convenient structural property 
termed {\it contractibility}, for which uniform expansion is a special case, and we show that the PCMC model exhibits contractibility. 
Of the models mentioned above, only Elimination by Aspects 
exhibits uniform expansion without being an independent RUM. 
Elimination by Aspects obeys regularity, 
which the PCMC model does not; 
as such, the PCMC model is uniquely positioned in the literature
of axiomatic discrete choice, minimally satisfying 
uniform expansion without the other aforementioned axioms.

After presenting the model and its properties, we investigate choice predictions from our model on two empirical choice datasets as well as  diverse synthetic datasets. The empirical choice datasets concern transportation choices made on commuting and shopping trips in San Francisco. Inference on synthetic data shows that PCMC is competitive with MNL when Luce's choice axiom holds, while PCMC outperforms MNL when the axiom does not hold. More significantly, for both of the empirical datasets we find that a learned PCMC model predicts empirical choices significantly better than a learned MNL model.

\begin{wrapfigure}{r}{0.35\textwidth}
\vspace{-21mm}
  \begin{center}
    \includegraphics[width=0.28\textwidth]{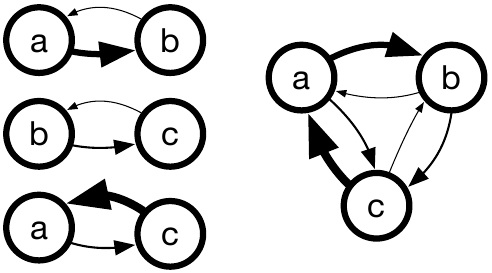}
  \end{center}
\vspace{-3mm}
  \caption{Markov chains on choice sets $\{a,b\}$, $\{a,c\}$, and $\{b,c\}$, where
  line thicknesses denote transition rates. The chain on the choice set $\{a,b,c\}$ 
  is assembled using the same rates.
   }
\vspace{-2mm}
\label{f:diagram}
\end{wrapfigure}

\section{The PCMC model}

\label{s:pcmc}
A Pairwise Choice Markov Chain (PCMC) model defines
the selection probability $p_{iS}$, the probability of choosing $i$ from $S \subseteq U$, as the probability mass on alternative $i \in S$ of the stationary distribution of a continuous time Markov chain (CTMC) on the set of alternatives $S$. The model's parameters are the off-diagonal entries $q_{ij}$ of rate matrix $Q$ indexed by pairs of elements in $U$. See Figure~\ref{f:diagram} for a diagram. We impose the constraint $q_{ij}+q_{ji} \geq 1$ for all pairs $(i,j)$, which ensures irreducibility of the chain for all $S$.

Given a query set $S \subseteq U$, we construct $Q_S$ by restricting 
the rows and columns of $Q$ to elements in $S$ and setting
$q_{ii} = -\sum_{j \in S \setminus i} q_{ij}$ for each $i \in S$.
Let $\pi_S = \{ \pi_S(i)\}_{i \in S}$ be the stationary distribution of the corresponding CTMC on $S$,
and let $\pi_S(A) = \sum_{x \in A} \pi_S(x)$.
We define the choice probability $p_{iS} := \pi_S(i)$, and
now show that the PCMC model is well defined.

\begin{proposition}
\label{pis-well-defined}
The choice probabilities $p_{iS}$ are well defined for all $i \in S$, all $S \subseteq U$ of a finite $U$. 
\end{proposition}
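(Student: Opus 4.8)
The plan is to reduce the claim to a standard fact about finite continuous-time Markov chains. Since $p_{iS} := \pi_S(i)$, ``well defined'' means precisely that the stationary distribution $\pi_S$ exists and is unique for every $S \subseteq U$. I would therefore establish three things in order: (i) that $Q_S$ is a bona fide CTMC generator on a finite state space, (ii) that a stationary distribution exists, and (iii) that it is unique, with the imposed constraint $q_{ij}+q_{ji}\geq 1$ doing its essential work only in the last step.

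First I would verify that $Q_S$ is a valid generator. Because $U$ is finite, every $S \subseteq U$ is finite, so $Q_S$ is a finite square matrix; by construction its off-diagonal entries are the rates $q_{ij}\geq 0$ and its diagonal is $q_{ii} = -\sum_{j \in S \setminus i} q_{ij}$, so every row sums to zero. Hence $Q_S$ generates a CTMC with transition semigroup $e^{t Q_S}$, and a stationary distribution is any probability vector solving the linear system $\pi_S Q_S = 0$. Existence then follows from finite-chain theory: the embedded jump chain is a finite discrete-time chain that has a stationary distribution by a Perron--Frobenius (or compactness/fixed-point) argument, and reweighting by the mean holding times yields a nonnegative, normalized $\pi_S$ with $\pi_S Q_S = 0$.

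The substance of the proof is uniqueness, and this is where the constraint enters. Requiring $q_{ij}+q_{ji}\geq 1$ for every pair forces at least one of $q_{ij},q_{ji}$ to be strictly positive, so the directed transition graph on $S$ contains an arc in at least one direction between every pair of states; such a semi-complete digraph has its strongly connected components linearly ordered by reachability (its condensation is acyclic and semi-complete, i.e.\ a transitive tournament), hence admits a unique terminal component that no arc leaves. This terminal component is the unique closed communicating class of the chain. Since the dimension of the solution space of $\pi_S Q_S = 0$ equals the number of closed communicating classes, that space is one-dimensional, the normalized $\pi_S$ is unique, and so $p_{iS}=\pi_S(i)$ is well defined.

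I expect this uniqueness step to be the only delicate point, precisely because the constraint yields a semi-complete but not necessarily strongly connected graph: one should argue through the unique closed (terminal) communicating class rather than assert outright irreducibility, since the two coincide only when no boundary rate vanishes. Whenever every $q_{ij}>0$ the chain is genuinely irreducible and $\pi_S$ is moreover strictly positive, which is the regime in which the choice model is intended to operate; but the closed-class argument secures well-definedness uniformly over the entire constraint region, including its boundary.
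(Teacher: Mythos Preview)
Your argument is correct and follows essentially the same route as the paper: both reduce well-definedness to the existence of a unique closed communicating class, and both extract that uniqueness from the constraint $q_{ij}+q_{ji}\ge 1$ forcing an arc in at least one direction between every pair. Your write-up is more detailed---you spell out existence, the generator conditions, and the condensation/tournament structure, and you are careful to distinguish a unique closed class from full irreducibility---but the core idea matches the paper's two-line contradiction exactly.
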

\vspace{-3mm}

\begin{proof}
We need only to show that there is a single closed communicating class. Because $S$ is finite, there must be at least one closed communicating class. Suppose the chain had more than one closed communicating class and that $i \in S$ and $j \in S$ were in different closed communicating classes. But $q_{ij}+q_{ji}\geq1$, so at least one of $q_{ij}$ and $q_{ji}$ is strictly positive and the chain can switch communicating classes through the transition with strictly positive rate,
a contradiction. 
\end{proof}
\vspace{-3mm}
While the support of $\pi_S$ is the single closed communicating class, $S$ may have transient states corresponding to alternatives with selection probability 0. 
Note that irreducibility argument needs only that $q_{ij}+q_{ji}$ be positive, 
not necessarily at least 1 
as imposed in the model definition. One could simply constrain 
$q_{ij}+q_{ji}\geq \epsilon$ for some positive $\epsilon$. 
However, multiplying all entries
of $Q$ by some $c>0$ does not affect the 
stationary distribution of the corresponding CTMC, so multiplication by $1/\epsilon$
gives a $Q$ 
with the same selection probabilities.

In the subsections that follow, we develop key properties of the model. We begin by showing how
assigning $Q$ according a Bradley-Terry-Luce (BTL) pairwise model results in the PCMC model being equivalent to BTL's canonical extension,
the Multinomial Logit (MNL) set-wise model. We then construct a $Q$ for which the PCMC model is neither regular nor a RUM.

\subsection{Multinomial Logit from Bradley-Terry-Luce}

We now observe that the Multinomial Logit (MNL) model, also called the Plackett-Luce model, is precisely a PCMC model with a matrix $Q$ consisting of pairwise BTL probabilities. 
Recall that the BTL model assumes the existence of latent ``quality'' parameters $\gamma_i>0$ for $i \in U$ with $p_{ij} = \gamma_i/(\gamma_i+\gamma_j), \forall i,j \in U$ and that the MNL generalization defines $p_{iS} \propto \gamma_i, \forall i \in S$ for each $S \subseteq U$. 

\begin{proposition}
Let $\gamma$ be the parameters of a BTL model on $U$. For 
$q_{ji}
=\frac{\gamma_i}{\gamma_i+\gamma_j}$, the PCMC probabilities $p_{iS}$ are consistent with an MNL model on $S$ with parameters $\gamma$. 
\end{proposition}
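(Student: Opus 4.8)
The plan is to exhibit the stationary distribution in closed form and verify it by detailed balance (reversibility), rather than solving the global balance system $\pi_S Q_S = 0$ directly. First I would fix the rate convention implied by the definition $q_{ii} = -\sum_{j \in S\setminus i} q_{ij}$: the off-diagonal entry $q_{ij}$ is the rate of transitioning from $i$ to $j$. Under the stated parameterization this means $q_{ji} = \gamma_i/(\gamma_i+\gamma_j)$ is the rate into $i$ from $j$, and symmetrically $q_{ij} = \gamma_j/(\gamma_i+\gamma_j)$. As a sanity check, $q_{ij}+q_{ji} = 1$, so the model constraint $q_{ij}+q_{ji}\geq 1$ holds with equality and Proposition~\ref{pis-well-defined} applies: the chain on $S$ is irreducible and $\pi_S$ is unique.

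Next I would propose the candidate stationary distribution $\pi_S(i) = \gamma_i / \sum_{k \in S}\gamma_k$, i.e.\ exactly the MNL probabilities $p_{iS}\propto\gamma_i$, and verify that it satisfies detailed balance. Concretely, for every pair $i,j \in S$ one checks
\[
\pi_S(i)\, q_{ij} = \gamma_i \cdot \frac{\gamma_j}{\gamma_i+\gamma_j} = \frac{\gamma_i\gamma_j}{\gamma_i+\gamma_j} = \gamma_j \cdot \frac{\gamma_i}{\gamma_i+\gamma_j} = \pi_S(j)\, q_{ji},
\]
where the constant normalizer $\sum_{k}\gamma_k$ cancels from both sides. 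The key structural observation is that the pairwise flow is governed by the symmetric quantity $\gamma_i\gamma_j/(\gamma_i+\gamma_j)$, which makes reversibility transparent.

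Finally I would invoke the standard fact that any distribution satisfying detailed balance with respect to $Q_S$ is automatically stationary (summing the balance equations over $j$ recovers $\pi_S Q_S = 0$). Since Proposition~\ref{pis-well-defined} guarantees the stationary distribution is unique, the candidate must be the stationary distribution, giving $p_{iS} = \pi_S(i) = \gamma_i/\sum_{k\in S}\gamma_k$, which is the MNL model on $S$ with parameters $\gamma$. I do not anticipate a serious obstacle here: the only real pitfall is getting the direction of the rates straight (which of $q_{ij}, q_{ji}$ points toward the preferred item), after which the detailed-balance check is a one-line cancellation. The main conceptual point worth flagging is simply that reversibility is the right tool, since attacking $\pi_S Q_S = 0$ head-on would require solving a linear system of size $|S|$ and obscure why the answer is so clean.
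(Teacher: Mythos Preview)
Your proof is correct, and it takes a genuinely different route from the paper. The paper verifies global balance directly: it plugs the candidate $\pi_S \propto \gamma$ into $(\pi_S^T Q_S)_i$, expands the sum, and observes that the two resulting terms $\sum_{j\neq i}\gamma_j q_{ji}$ and $\gamma_i\sum_{j\neq i} q_{ij}$ both collapse to $\gamma_i\sum_{j\neq i}\gamma_j/(\gamma_i+\gamma_j)$ and hence cancel. You instead verify detailed balance pairwise, noting that $\pi_S(i)q_{ij} = \gamma_i\gamma_j/(\gamma_i+\gamma_j) = \pi_S(j)q_{ji}$, and then appeal to the standard implication from reversibility to stationarity. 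Your route is arguably more informative: it establishes that the BTL-parameterized chain is not just stationary at $\gamma/\|\gamma\|_1$ but reversible there, with the symmetric pairwise flow $\gamma_i\gamma_j/(\gamma_i+\gamma_j)$ making this transparent. The paper's global-balance computation is a one-line verification too, but it hides the pairwise symmetry inside a sum and does not surface reversibility explicitly. Both proofs finish the same way, invoking Proposition~\ref{pis-well-defined} for uniqueness.
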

\begin{proof}
We aim to show that $\pi_S = \frac{\gamma}{||\gamma||_1}$ is a stationary distribution of the PCMC chain: $\pi_S^T Q_S=0$. We have: 
\begin{align*}
(\pi_S^T Q_S)_i 
&=\frac{1}{||\gamma||_1}\left(\sum_{j \neq i} \gamma_j q_{ji} - \gamma_i(\sum_{j \neq i} q_{ji})\right)&=\frac{\gamma_i}{||\gamma||_1}\left(\sum_{j \neq i} \frac{\gamma_j}{\gamma_i+\gamma_j}- \sum_{j \neq i} \frac{\gamma_j}{\gamma_i+\gamma_j}\right)
=0, \hspace{1mm} \forall i.
\end{align*}
Thus $\pi_S$ is always the stationary distribution of the chain, and we know by Proposition \ref{pis-well-defined} that it is unique. It follows that 
$p_{iS}\propto \gamma_i$ for all $i \in S$, as desired.
\end{proof}

Other parameterizations of $Q$, which can be used for parameter reduction or to extend arbitrary models for pairwise choice, are explored section 1 of the Supplementary material.

\subsection{A counterexample to regularity}

The regularity property stipulates that for any $S'\subset S$, the probability of selecting $a$ from $S'$ is at least the probability of selecting $a$ from $S$. All RUMs exhibit regularity because $S' \subseteq S$ implies $\Pr(X_i = \max_{j \in S'} X_j)\geq \Pr(X_i = \max_{j \in S} X_j)$. We now construct a simple PCMC model which does not exhibit regularity, and is thus not a RUM. 

Consider $U=\{r,p,s\}$ corresponding to a rock-paper-scissors-like
stochastic game where each pairwise matchup has the same win probability $\alpha>\frac{1}{2}$. Constructing a PCMC model where the transition rate from $i$ to $j$ is $\alpha$ if $j$ beats $i$ in rock-paper-scissors yields the rate matrix 
\[
Q
=
\begin{bmatrix}
   -1 & 1-\alpha & \alpha \\
   \alpha & -1 & 1-\alpha \\
   1-\alpha & \alpha & -1\\
\end{bmatrix}.
\] 
We see that for pairs of objects, the PCMC model returns the same probabilities as the pairwise game, i.e.~$p_{ij}=\alpha$ when $i$ beats $j$ in rock-paper-scissors, 
as $p_{ij}=q_{ji}$ when $q_{ij}+q_{ji}=1$. Regardless of how the probability $\alpha$ is chosen, however, 
it is always the case that $p_{rU}=p_{pU}=p_{sU}=1/3$. It follows that regularity does not hold for $\alpha>2/3$. 

We view the PCMC model's lack of regularity is a positive trait in the sense that empirical choice phenomena such as framing effects and asymmetric dominance violate regularity  \cite{huber1982adding}, and the PCMC model is rare in its ability to model such choices. Deriving necessary and sufficient conditions on $Q$ for a PCMC model to be a RUM, analogous to known characterization theorems for RUMs \cite{falmagne1978representation} and known sufficient conditions for nested MNL models to be RUMs \cite{borsch1990compatibility}, is an interesting open challenge.


\section{Properties}
\label{s:ue}

While we have demonstrated already that the PCMC model avoids several restrictive properties that are often inconsistent with empirical choice data, we demonstrate in this section that the PCMC model still exhibits deep structure in the form of contractibility, which implies uniform expansion. Inspired by a thought experiment that was posed as an early challenge to the choice axiom, we define the property of contractibility to handle notions of similarity between elements. We demonstrate that the PCMC model exhibits contractibility, which gracefully handles this thought experiment.


\subsection{Uniform expansion}

Yellott \cite{yellott1977relationship} introduced {\it uniform expansion} as a weaker condition than Luce's choice axiom, but one that implies the choice axiom in the context of any independent RUM.  Yellott posed the axiom of invariance to uniform expansion in the context of ``copies'' of elements which are ``identical.'' In the context of our model, such copies would have identical transition rates to alternatives:
\begin{definition}[Copies]
For $i,j$ in $S \subseteq U$, we say that $i$ and $j$ are copies if for all $k\in S-i-j$, $q_{ik}=q_{jk}$ and $q_{ij}=q_{ji}$. 
\end{definition}

Yellott's introduction to uniform expansion asks the reader to consider an offer of a choice of beverage from $k$ identical cups of coffee, $k$ identical cups of tea, and $k$ identical glasses of milk. Yellott contends that the probability the reader
chooses a type of beverage (e.g.~coffee) in this scenario should be the same as if they were only shown one cup of each beverage type, regardless of $k\geq 1$. 

\begin{definition}[Uniform Expansion]
Consider a choice between $n$ elements in a set $S_1=\{i_{11},\dots,i_{n1}\}$, and another choice from a set $S_k$ containing $k$ copies of each of the $n$ elements: $S_k=\{i_{11},\dots,i_{1k},i_{21},\dots,i_{2k},\dots,i_{n1},\dots,i_{nk}\}$. The axiom of uniform expansion states that for each $m=1,\dots,n$ and all $k \ge 1$:
\[
p_{i_{m1}S_1} = \sum_{j=1}^k p_{i_{mj}S_k}. 
\]
\end{definition}

We will show that the PCMC model always exhibits a more general property of contractibility, of which uniform expansion is a special case; it thus always exhibits uniform expansion.

Yellott showed that for any independent RUM with $|U| \geq 3$ the double-exponential distribution family is the only family of independent distributions that exhibit uniform expansion for all $k \geq 1$, and that Thurstone's model based on the Gaussian distribution family in particular does not exhibit uniform expansion. 

While uniform expansion seems natural in many discrete choice contexts, it should be regarded with some skepticism in applications that model competitions. Sports matches or races are often modeled using RUMs, where the winner of a competition can be modeled as the competitor with the best draw from their random variable. If a competitor has a performance distribution with a heavy upper tail (so that their wins come from occasional ``good days''), uniform expansion would not hold. This observation relates to recent work on team performance and selection \cite{kleinberg2015team}, where non-invariance under uniform expansion plays a key role. 


\subsection{Contractibility}

In a book review of Luce's early work on the choice axiom, Debreu \cite{debreu1960individual} considers a hypothetical choice between three musical recordings: one of Beethoven's eighth symphony conducted by $X$, another of Beethoven's eighth symphony conducted by $Y$, and one of Debussy quartet conducted by $Z$. We will call these options $B_1$, $B_2$, and $D$ respectively. When compared to $D$, Debreu argues that $B_1$ and $B_2$ are indistinguishable in the sense that $p_{DB_1}=p_{DB_2}$.  However, someone may prefer $B_1$ over $B_2$ in the sense that $p_{B_1B_2}>0.5$. This is impossible under a BTL model, in which $p_{DB_1}=p_{DB_2}$ implies that $\gamma_{B_1}=\gamma_{B_2}$ and in turn $p_{B_1B_2}=0.5$.

To address contexts in which elements compare identically to alternatives but not each other  (e.g.~$B_1$ and $B_2$), we introduce {\it contractible partitions} that group these similar alternatives into sets. 
We then show that when a PCMC model contains a contractible partition, the relative probabilities of selecting from one of these partitions 
is independent from how comparisons are made between alternatives in the same set. 
Our contractible partition definition can be viewed as akin to (but distinct from) {\it nests} in nested MNL models \cite{mcfadden1980econometric}.

\begin{definition}[Contractible Partition]
A partition of $U$ into non-empty sets $A_1,\dots,A_k$ is a {\it contractible partition}  if $q_{a_ia_j} = \lambda_{ij
}$ for all $a_i \in A_i, a_j \in A_j$ for some $\Lambda = \{\lambda_{ij}\}$ for $i,j \in \{1,\dots,k\}$.
\end{definition}

\begin{proposition}
For a given $\Lambda$, let $A_1,\dots,A_k$ be a contractible partition for two PCMC models on $U$ represented by $Q,Q'$ with stationary distributions $\pi, \pi'$. Then for any $A_i$:
\begin{align}
\sum_{j \in A_i} p_{jU} = \sum_{j \in A_i} p'_{jU},
\label{eq:contractsum}
\end{align}
or equivalently, $\pi(A_i)=\pi'(A_i)$. 
\end{proposition}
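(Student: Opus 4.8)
The plan is to work directly with the stationary balance equations and to show that summing them over each block of the contractible partition produces a reduced linear system for the block masses $\pi(A_i)$ that depends only on the block sizes $|A_i|$ and the common cross-block rates $\Lambda$, and not at all on the within-block entries of $Q$. The stationary condition $\pi^T Q = 0$ states that for every state $x$ the inflow equals the outflow, $\sum_{y \ne x} \pi(y) q_{yx} = \pi(x) \sum_{y \ne x} q_{xy}$. First I would sum this identity over all $x \in A_i$ and split each inner sum according to whether the other endpoint lies inside or outside $A_i$. The key first observation is that the terms indexed by pairs $x,y$ both lying in $A_i$ cancel: relabeling the summation indices $x \leftrightarrow y$ turns the within-block part of the right-hand side into the within-block part of the left-hand side. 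Hence all dependence on the internal comparisons of $A_i$ disappears, and what survives is a balance equation between the block $A_i$ and its complement.

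Next I would substitute the contractibility hypothesis $q_{a_i a_j} = \lambda_{ij}$ into the surviving cross-block terms. Since every element of $A_j$ transitions to every element of $A_i$ at the common rate $\lambda_{ji}$ (and conversely at rate $\lambda_{ij}$), the cross-block flows factor through the block masses, leaving for each $i$ the closed system
\[
|A_i| \sum_{j \ne i} \lambda_{ji}\, \pi(A_j) = \pi(A_i) \sum_{j \ne i} |A_j|\, \lambda_{ij}.
\]
Writing $P_i = \pi(A_i)$, this is exactly the stationary equation $\mathbf{P}^T \tilde Q = 0$ for a CTMC on the $k$ blocks whose off-diagonal rate from block $i$ to block $j$ is $\tilde q_{ij} = |A_j|\lambda_{ij}$. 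The point is that $\tilde Q$ is assembled only from $\Lambda$ and the sizes $|A_i|$, which are shared by the two models, so $\mathbf{P} = (\pi(A_1),\dots,\pi(A_k))$ and $\mathbf{P}' = (\pi'(A_1),\dots,\pi'(A_k))$ are both stationary distributions of the \emph{same} aggregated chain.

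To conclude I would invoke uniqueness of that stationary distribution. The model constraint $q_{ab}+q_{ba}\ge 1$ forces $\lambda_{ij}+\lambda_{ji}\ge 1$ for $i \ne j$, and since each block is non-empty ($|A_\ell|\ge 1$) we get $\tilde q_{ij}+\tilde q_{ji}>0$; the argument of Proposition~\ref{pis-well-defined} then applies verbatim to $\tilde Q$, giving a single closed communicating class and hence a unique stationary distribution. Because $\pi$ and $\pi'$ are probability distributions on $U$, both $\mathbf{P}$ and $\mathbf{P}'$ sum to one, so uniqueness forces $\mathbf{P}=\mathbf{P}'$, i.e.~$\pi(A_i)=\pi'(A_i)$ for every $i$, which is the claim.

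I expect the cancellation of the within-block terms to be the conceptual crux, since it is precisely what renders the reduced system independent of the internal comparisons (the whole motivation behind the indistinguishable-recordings example), and it must be done carefully, keeping track of the asymmetry between the outflow $q_{xy}$ out of a state and the symmetric double counting that arises when both endpoints lie in the same block. The remaining steps—factoring the cross-block flows through $\Lambda$ and reusing the irreducibility argument for the lumped chain—are then routine.
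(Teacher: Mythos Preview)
Your proposal is correct and follows essentially the same route as the paper's proof: sum the global balance equations over each block, observe that the within-block contributions cancel by relabeling, use contractibility to collapse the cross-block terms into the lumped balance equations $|A_i|\sum_{j\ne i}\lambda_{ji}\pi(A_j)=\pi(A_i)\sum_{j\ne i}|A_j|\lambda_{ij}$ for a CTMC on the blocks with rates $\tilde q_{ij}=|A_j|\lambda_{ij}$, and then invoke the irreducibility argument of Proposition~\ref{pis-well-defined} for uniqueness. The only cosmetic difference is that the paper first substitutes $\lambda_{ij}$ and then sums over $x\in A_1$, whereas you sum first and substitute second; the logic is identical.
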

\begin{proof}
Suppose $Q$ has contractible partition $A_1,\dots,A_k$ with respect to $\Lambda$. If we decompose the balance equations (i.e.~each row of 
$\pi^TQ=0$), for $x \in A_1$ WLOG we obtain:
\begin{align}
\label{eq:above}
\pi(x)\left(\sum_{y \in A_1\sm x} q_{xy} + \sum_{i=2}^k\sum_{a_i \in A_i} q_{xa_i}\right) = 
\sum_{y \in A_1\sm x}\pi(y)q_{yx} + \sum_{i=2}^k \sum_{a_i \in A_i} \pi(a_i)q_{a_ix}.
\end{align}
Noting that for $a_i\in A_i$ and $a_j \in A_j$, $q_{a_ia_j} = \lambda_{ij}$, \eqref{eq:above} can be rewritten:
\begin{align*}
\pi(x)\left(\sum_{y \in A_1\sm x} q_{xy}\right) + \pi(x)\sum_{i=2}^k|A_i|\lambda_{i1} = 
\sum_{y \in A_1\sm x}\pi(y)q_{yx} + \sum_{i=2}^k  \pi(A_i)\lambda_{i1}.
\end{align*}
Summing over $x \in A_1$ then gives
\begin{align*}
\sum_{x \in A_1}\pi(x)\left(\sum_{y \in A_1\sm x} q_{xy}\right) + \pi(A_1)\sum_{i=2}^k|A_i|\lambda_{i1}  =
 \sum_{x \in A_1} \sum_{y \in A_1\sm x}\pi(y)q_{yx} + |A_1|\sum_{i=2}^k  \pi(A_i)\lambda_{i1}.
\end{align*}
The leftmost term of each side is equal, 
so we have 
\begin{equation} \label{eq:1}
\pi(A_1) = \frac{|A_1| \sum_{i=2}^k \pi(A_i)\lambda_{i1}}{\sum_{i=2}|A_i|\lambda_{1i}},
\end{equation}
which makes $\pi(A_1)$ the solution to global balance equations for a different continuous time Markov chain with the states $\{A_1,\dots,A_k\}$ and transition rate $\tilde q_{A_iA_j} = |A_j|\lambda_{ij}$ between state $A_i$ and $A_j$, and $\tilde q_{A_iA_i} = -\sum_{j \neq i} \tilde q_{A_iA_j}$. Now $q_{a_ia_j}+q_{a_ja_i}\geq1$ implies $\lambda_{ij}+\lambda_{ji}\geq1$. Combining this observation with $|A_i|>0$ shows (as with the proof of Proposition 1) that this chain is irreducible and thus that $\{\pi(A_i)\}_{i=1}^k$ are well-defined.  Furthermore, because $\tilde Q$ is determined entirely by $\Lambda$ and $|A_1|,\dots,|A_k|$, we have that $\tilde Q = \tilde Q'$, and thus that $\pi(A_i) = \pi'(A_i), \forall i$ regardless of how $Q$ and $Q'$ may differ, completing the proof.
\end{proof}

 The intuition is that we can ``contract'' each $A_i$ to a single ``type'' because the probability of choosing an element of $A_i$ is independent of the pairwise probabilities between elements within the sets.
The above proposition and the contractibility of a PCMC model on all uniformly expanded sets implies that all PCMC models exhibit uniform expansion.

\begin{proposition}
Any PCMC model exhibits uniform expansion.
\end{proposition}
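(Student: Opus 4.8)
The plan is to realize the uniformly expanded set $S_k$ as a PCMC model carrying a natural contractible partition, and then apply the preceding proposition on contractibility together with the scale-invariance of stationary distributions noted after Proposition~\ref{pis-well-defined}.

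First I would fix the uniform expansion setup and consider the PCMC model on $S_k$, grouping the copies by type: let $A_m = \{i_{m1},\dots,i_{mk}\}$ for $m=1,\dots,n$. Because the $k$ copies of each type $i_m$ are copies in the sense of the Copies definition, any copy $i_{ma}$ and any copy $i_{m'b}$ of two (possibly equal) types satisfy $q_{i_{ma}i_{m'b}} = q_{i_{m1}i_{m'1}}$, a value depending only on $m$ and $m'$. Setting $\lambda_{mm'} := q_{i_{m1}i_{m'1}}$ then exhibits $A_1,\dots,A_n$ as a contractible partition of $S_k$ with respect to $\Lambda = \{\lambda_{mm'}\}$.

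Next I would invoke the contracted chain built in the proof of the previous proposition. The aggregated masses $\pi_{S_k}(A_m) = \sum_{j=1}^k p_{i_{mj}S_k}$ form the stationary distribution of a CTMC $\tilde Q$ on the $n$ states $\{A_1,\dots,A_n\}$ with rates $\tilde q_{A_m A_{m'}} = |A_{m'}|\lambda_{mm'} = k\,q_{i_{m1}i_{m'1}}$. Since every block has the same size $k$, the matrix $\tilde Q$ is exactly $k$ times the rate matrix of the PCMC chain on $S_1 = \{i_{11},\dots,i_{n1}\}$, whose off-diagonal entries are the $q_{i_{m1}i_{m'1}}$. Applying the observation that multiplying every entry of a rate matrix by a positive constant leaves the stationary distribution unchanged, the contracted chain and the $S_1$ chain share a stationary distribution, so $\pi_{S_k}(A_m) = \pi_{S_1}(i_{m1})$, i.e.~$\sum_{j=1}^k p_{i_{mj}S_k} = p_{i_{m1}S_1}$ for every $m$ and all $k\geq1$, which is precisely uniform expansion.

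The main obstacle is conceptual rather than computational: recognizing that the uniform block size $|A_m|=k$ is exactly what makes the contracted rates $k\,\lambda_{mm'}$ a global scalar multiple of the $S_1$ rates, so that scale-invariance collapses the $S_k$ contraction onto the $S_1$ chain. For a general contractible partition the block sizes differ and no such collapse occurs; it is the equal-copy structure special to uniform expansion that licenses this reduction.
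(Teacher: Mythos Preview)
Your proof is correct and follows essentially the same route as the paper: form the contractible partition $A_m=\{i_{m1},\dots,i_{mk}\}$ with $\Lambda$ given by the original $Q$ on $S_1$, then use $|A_m|=k$ for all $m$ to identify the contracted chain with (a scalar multiple of) the $S_1$ chain. The only cosmetic difference is that you phrase the last step via scale-invariance of stationary distributions, whereas the paper plugs $|A_m|=k$ directly into its balance equation \eqref{eq:1} to cancel the common factor; these are the same observation.
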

\begin{proof}
We translate the problem of uniform expansion into the language of contractibility. Let $U_1$ be the universe of unique items $i_{11},i_{21},\dots,i_{n1}$, and let $U_k$ be a universe containing $k$ copies of each item in $U_1$. Let $i_{mj}$ denote the $j$th copy of the $m$th item in $U_1$. Thus $U_k = \cup_{m=1}^n \cup_{j=1}^k i_{mj}$. 

Let $Q$ be the transition rate matrix of the CTMC on $U_1$. We construct a contractible partition of $U_k$ into the $n$ sets, each containing the $k$ copies of some item in $U_1$. Thus $A_m = \cup _{j=1}^k i_{mj}$. By the definition of copies, that $\{A_m\}_{m=1}^n$ is a contractible partition of $U_k$ with $\Lambda = Q$. Noting $|A_m|=k$ for all $m$ in Equation \eqref{eq:1} above results in $\{\pi(A_m)\}_{m=1}^n$  being  the solution to $\pi^T Q=\pi^T \Lambda=0$. Thus $p_{i_mU_1} = \pi(A_m) = \sum_{j=1}^k p_{i_{mj}U_k}$ for each $m$, showing that the model exhibits uniform expansion.
\end{proof}

We end this section by noting that every PCMC model has a trivial contractible partition into singletons. Detection and exploitation of $Q$'s non-trivial contractible partitions (or appropriately defined ``nearly contractible partitions'') are interesting open research directions.


\section{Inference and prediction}
\label{s:inference}

Our ultimate goal in formulating this model is to make predictions: using past choices from diverse subsets $S \subseteq U$ to predict future choices.  In this section we first give the log-likelihood function $\log \mathcal L ( Q; \mathcal C)$ of the rate matrix $Q$ given a choice data collection of the form $\mathcal C=\{(i_k,S_k)\}_{k=
1}^n$, where $i_k \in S_k$ was the item chosen from $S_k$.  We then investigate the ability of a learned PCMC model to make choice predictions on empirical data, benchmarked against learned MNL and MMNL models, and interpret the inferred model parameters $\hat Q$. Let $C_{iS}(\mathcal C)=|\{(i_k,S_k) \in \mathcal C: i_k=i,S_k=S\}|$ denote the number of times in the data that $i$ was chosen out of set $S$ for each $S \subseteq U$, and let $C_S(\mathcal C) = |\{(i_k,S_k) \in \mathcal C: S_k=S\}|$ be the number of times that $S$ was the choice set for each $S \subseteq U$.

\subsection{Maximum likelihood}

For each $S \subseteq U$, $i \in S$, recall that $p_{iS}(Q)$ is the probability that $i$ is selected from set $S$ as a function of the rate matrix $Q$. 
After dropping all additive constants, the log-likelihood of $Q$ given the data $\mathcal C$ (derived from the probability mass function of the multinomial distribution) is:
$$ \log \L(Q ; \mathcal C) = 
\sum_{S \subseteq U}\sum_{i \in S} C_{iS}(\mathcal C)\log(p_{iS}(Q)).$$ 

Recall that for the PCMC model, $p_{iS}(Q) = \pi_{S}(i)$, where $\pi_{S}$ is the stationary distribution for a CTMC with rate matrix $Q_S$, i.e.~$\pi_S^TQ_S=0$ and $\sum_{i\in S} \pi_S(i)=1$. There is no general closed form expression for $p_{iS}(Q)$. The implicit definition also makes it difficult to derive gradients for $\log \mathcal L$ with respect to the parameters $q_{ij}$. We 
employ SLSQP \cite{nocedal2006numerical} to maximize $\log \mathcal L ( Q; \mathcal C)$, which is non-concave in general. For more information on the optimization techniques used in this section, see the Supplementary Materials.

\subsection{Empirical data results}

We evaluate our inference procedure on two empirical choice datasets, {\tt SFwork} and {\tt SFshop}, collected from a survey of transportation preferences around the San Francisco Bay Area \cite{koppelman2006self}. The {\tt SFshop} dataset contains 3,157 observations each consisting of a choice set of transportation alternatives available to individuals traveling to and returning from a shopping center, as well as a choice from that choice set. 
The {\tt SFwork} dataset, meanwhile, contains 5,029 observations consisting of commuting options and the choice made on a given commute. Basic statistics describing the choice set sizes and the number of times each pair of alternatives appear in the same choice set appear in the Supplementary Materials\footnote{Data and code available here: \href{https://github.com/sragain/pcmc-nips}{https://github.com/sragain/pcmc-nips}}. 

We train our model on observations $T_{\text{train}}\subset \mathcal C$ and evaluate on a test set $ T_{\text{test}} \subset \mathcal C$ via
\begin{equation}\label{eq:err}
\text{Error}(T_{\text{train}};T_{\text{test}}) =
\frac{1}{|T_{\text{test}}| }
\sum_{(i, S) \in T_{\text{test}}}\sum_{j \in S} | p_{jS}(\hat Q(T_{\text{train}})) - \tilde p_{iS}(T_{\text{test}})|,
\end{equation}
where $\hat Q(T_{\text{train}})$ is the estimate for $Q$ obtained from the observations in $T_{\text{train}}$ and $\tilde p_{iS}(T_{\text{test}}) = C_{iS}(T_{\text{test}})/C_S(T_{\text{test}})$ is the empirical probability of $i$ was selected from $S$ among observations in $T_{\text{test}}$. 
Note that $ \text{Error}(T_{\text{train}};T_{\text{test}})$ is the expected $\ell_1$-norm of the difference between the empirical distribution and the inferred distribution on a choice set drawn uniformly at random from the observations in $T_{\text{test}}$. We applied small amounts of additive smoothing to each dataset.

We compare our PCMC model against both an MNL model trained using Iterative Luce Spectral Ranking (I-LSR) \cite{maystre2015fast} and a more flexible MMNL model. We used a discrete mixture of $k$ MNL models (with $O(kn)$ parameters), choosing $k$ so that the MMNL model had strictly more parameters than the PCMC model on each data set. For details on how the MMNL model was trained, see the Supplementary Materials. 
 
Figure \ref{fig:learn} shows $ \text{Error}(T_{\text{train}};T_{\text{test}})$ on the {\tt SFwork} data as the learning procedure is applied to increasing amounts of data. The results are averaged over 1,000 different permutations of the data with a 75/25 train/test split employed for each permutation. 
We show the error on the testing data as we train with increasing proportions of the training data.  A similar figure for {\tt SFshop} data appears in the Supplementary Materials. 

We see that our model is better equipped to learn from and make predictions in both datasets, and when using all of the training data, we observe an error reduction of 36.2\% and 46.5\% compared to MNL and 24.4\% and 31.7\% compared to MMNL on {\tt SFwork} and {\tt SFshop} respectively.

Figure \ref{fig:learn} also gives two different heat maps of $\hat Q$ for the {\tt SFwork} data, showing the relative rates $\hat q_{ij}/ \hat q_{ji}$ between pairs of items as well as how the total rate $\hat q_{ij}+\hat q_{ji}$ between pairs compares to total rates between other pairs. The index ordering of each matrix follows the estimated selection probabilities of the PCMC model on the full set of the alternatives for that dataset.  The ordered options for {\tt SFwork} are: (1) driving alone, (2) sharing a ride with one other person, (3) walking, (4) public transit, (5) biking, and (6) carpooling with at least two others. Numerical values for the entries of $\hat Q$ for both datasets appear in the Supplementary Materials.

The inferred pairwise selection probabilities are $\hat p_{ij}=\hat q_{ji}/(\hat q_{ji}+\hat q_{ij})$. Constructing a tournament graph on the alternatives where $(i,j) \in E$ if $\hat p_{ij} \ge 0.5$, cyclic triplets are then length-3 cycles in the tournament. A bound due to Harary and Moser  \cite{Harary1966theory} establishes that the maximum number of cyclic triples on a tournament graph on $n$ nodes is $8$ when $n=6$ and $20$ when $n=8$. According to our learned model, the choices exhibit 2 out of a maximum 8 cyclic triplets in the {\tt SFwork} data and 6 out of a maximum 20 cyclic triplets for the {\tt SFshop} data.

Additional evaluations of predictive performance across a range of synthetic datasets appear in the Supplementary Materials. The majority of datasets in the literature on discrete choice focus on pairwise comparisons or ranked lists, where lists inherently assume transitivity and the independence of irrelevant alternatives. The {\tt SFwork} and {\tt SFshop} datasets are rare examples of public datasets that genuinely study choices from sets larger than pairs.

\begin{figure*}[t]
\centering
\includegraphics[scale=.315]{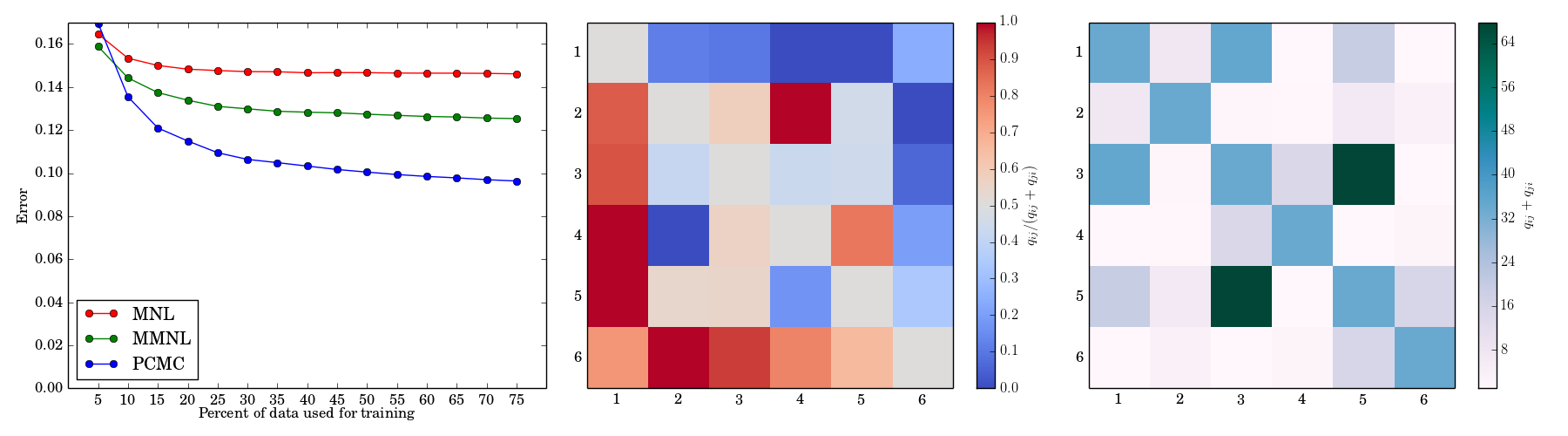}
\vspace{-5mm}
\caption{Prediction error on a 25\% holdout of the {\tt SFwork} data for the PCMC, MNL, and MMNL models. 
PCMC sees improvements of  34.1\% and 23.1\% in prediction error over MNL and MMNL, respectively, when training on 75\% of the data.}
\vspace{-2mm}
\label{fig:learn}
\end{figure*}

\section{Conclusion}

We introduce a Pairwise Choice Markov Chain (PCMC) model of discrete choice which defines selection probabilities according to the stationary distributions of continuous time Markov chains on alternatives. The model parameters are the transition rates between pairs of alternatives.

In general the PCMC model is not a random utility model (RUM), and maintains broad flexibility by eschewing the implications of Luce's choice axiom, stochastic transitivity, and regularity. Despite this flexibility, we demonstrate that the PCMC model exhibits desirable structure by fulfilling uniform expansion, a property previously found only in the Multinomial Logit (MNL) model and the intractable Elimination by Aspects model. 
  
We also introduce the notion of contractibility, a property motivated by thought experiments instrumental in moving choice theory beyond the choice axiom, for which Yellott's axiom of uniform expansion is a special case. Our work demonstrates that the PCMC model exhibits contractibility, which implies uniform expansion. We also showed that the PCMC model offers straightforward inference through maximum likelihood estimation, and that a learned PCMC model predicts empirical choice data with a significantly higher fidelity than both MNL and MMNL models.
  
The flexibility and tractability of the PCMC model opens up many compelling research directions. First, what necessary and sufficient conditions on the matrix $Q$ guarantee that a PCMC model is a RUM \cite{falmagne1978representation}? The efficacy of the PCMC model suggests exploring other effective parameterizations for $Q$, including developing inferential methods which exploit contractibility. There are also open computational questions, such as streamlining the likelihood maximization using gradients of the implicit function definitions. Very recently, learning results for nested MNL models have shown favorable query complexity under an oracle model \cite{benson2016relevance}, and a comparison of our PCMC model with these approaches to learning nested MNL models is important future work.

\vspace{-.1 cm}
\xhdr{Acknowledgements.}
This work was supported in part by a David Morgenthaler II Faculty Fellowship and a Dantzig--Lieberman Operations Research Fellowship. We thank Flavio Chierichetti and Ravi Kumar for noticing an irregularity in the previously published version of Figure~\ref{fig:shop} that helped us identify a minor bug. Figures~\ref{fig:learn} and \ref{fig:shop} have been fixed in this arxiv version.

\small
\bibliographystyle{plain}
\bibliography{bibl}

\clearpage

\section*{Supplemental Materials}

\section{Parameterizations of the $Q$ matrix}
\label{sec:bc}

As observed in the main paper, if we parametrize $Q$ with pairwise probabilities from a BTL model with parameters $\gamma = \{ \gamma_i \}_{i=1}^n$, i.e. $q_{ij} = p_{ji} = \frac{\gamma_j}{\gamma_i+\gamma_j}$, the resulting PCMC model is equivalent to an MNL model with parameters $\gamma$. In this section we explore some other ways of parameterizing $Q$ via a pairwise probability matrix $P$ with entries $p_{ij}$ and setting $Q=P^T$. 

Blade-Chest models \cite{chen2015sword} are based on geometric $d$-dimensional embeddings of alternatives, where each alternative $i$ is parameterized with a blade vector $b_i \in \mathbb R^d$ and a chest vector $c_i \in \mathbb R^d$, in addition to a BTL-like quality parameter $\gamma_i >0$. The Blade-Chest model comes in two variations: the {\it Blade-Chest distance model}, where 
$$p_{ij}(b,c,\gamma) = S(||b_i - c_j ||^2_2 - ||b_j - c_i ||^2_2 + \gamma_i - \gamma_j),$$ 
where $S(x)~=~ (1+\exp(-x))^{-1}$ is the sigmoid/logistic function, and the {\it Blade-Chest inner product model}, where 
$$p_{ij}(b,c,\gamma)= S(b_i \cdot c_j - b_j \cdot c_i + \gamma_i - \gamma_j).$$ 
The quality parameters $\gamma_i$ serve to connect the models to the BTL model, but do not meaningfully increase their expressiveness, so we disregard them in our use of the Blade-Chest model here. These two Blade-Chest models provide useful parameterizations of non-transitive pairwise probability matrices, $P(\theta)$, with $\theta = \{b,c\}$ consisting of the $2dn$ parameters of the ``blade'' and ``chest'' embeddings.

Another technique for parametrizing $Q$ involves representing $q_{ij}$ as a function of features of $i$ and $j$, i.e. $q_{ij}=f(X_i,X_j;\theta)$ where $X_i$ gives the salient features of $i$, and $\theta$ represents parameters that, for instance, give weights to these features. We can also formulate such analysis as a factoring $Q=W^TF-D$, where $W$ is a weight matrix,$F$ a feature matrix, and $D$ a diagonal matrix ensuring that the row sums of $Q$ are zero. 
We do not explore such parameterizations in this work, but merely highlight the potential to employ latent features of objects in a straight-forward manner, an approach closely related to conjoint analysis \cite{green2001thirty}. Such extensions would be similar to Chen and Joachims's work exploiting features of pairwise matchups by parametrizing the blades and chests as functions of those features \cite{chen2016context}.

\section{Inference with synthetic data}

We now evaluate our inference procedure's performance in three synthetic data regimes: (i) choice data generated from a PCMC model with $q_{ij}$ drawn i.i.d.~uniformly from $[0,1]$, (ii) choice data generated for a simple MNL model with qualities $\gamma$ drawn uniformly on the simplex, and (iii) choice data generated from a PCMC model with $Q$ parameterized by a two-dimensional Blade-Chest distance model. In order to create a strongly non-transitive instance of the Blade-Chest distance model, we draw the blades $b_i$ and chests $c_i$ uniformly at i.i.d.~points along the two-dimensional unit circle, naturally producing many triadic impasses.

The PCMC model's $Q$ matrix has $n(n-1)$ parameters in general. When $Q$ is parameterized according to BTL we have just $n$ parameters, and when it is parameterized according to a Blade-Chest distance model in $d$ dimensions, we have $2dn$ parameters.

We evaluate each parameterization (arbitrary, MNL, Blade-Chest distance) for each synthetic regime. We employ the Iterative Luce Spectral Ranking (I-LSR) algorithm to learn the model under the BTL parameterization of $Q$, where the PCMC model is equivalent to an MNL model.  When the data is generated by MNL, we expect MNL 
to outperform inference under the general parameterization. When the data is generated by a non-MNL PCMC model, we expect MNL
to exhibit restricted performance compared to a general parameterization, since the data is not generated by a model that MNL can capture.

\subsection{Synthetic data results}

We generate training data $T_{\text{train}}$ and test data $T_{\text{test}}$ from each model using 25 randomly chosen triplets as choice sets. We then follow the inferential procedure in the main paper to evaluare the inferential efficacy of each of the three models on data generated according to each.

Figure \ref{fig:learn} shows our error performance as the data grows, averaged across 10 instances, for each data generating process and each inference parameterization. We generate 5000 samples, assign 1000 of these to be testing samples, and incrementally add the other 4000 samples to the training data, tracking error on the testing samples as we increase the size of the training data set.

The inference is applied to a set $U$ with $n=10$ objects, meaning that MNL has 20 parameters, the PCMC model with arbitrary $Q$ has $n(n-1)=90$ parameters, and the PCMC model with the Blade-Chest distance parameterization in $\R^2$ uses $2dn=40$ parameters. Overall we examine 9 data--model pairs, trained sequentially in 5 episodes, averaged across 10 instances. The figure thus represents 450 trained models. 

As expected, the inferred PCMC models outperform MNL
on data exhibiting IIA violations, while the MNL model 
learns the MNL data better, though PCMC is not far behind. More significantly, the Blade-Chest parametrization of the PCMC model performs very similarly to the general PCMC model in all three scenarios, despite having far fewer parameters. This is promising in domains where the $O(n^2)$ parameters of a general PCMC model is infeasible but a $O(n)$ parameterization using a Blade-Chest representation may be feasible. 

\begin{figure*}[t]
\centering
\includegraphics[scale=.30]{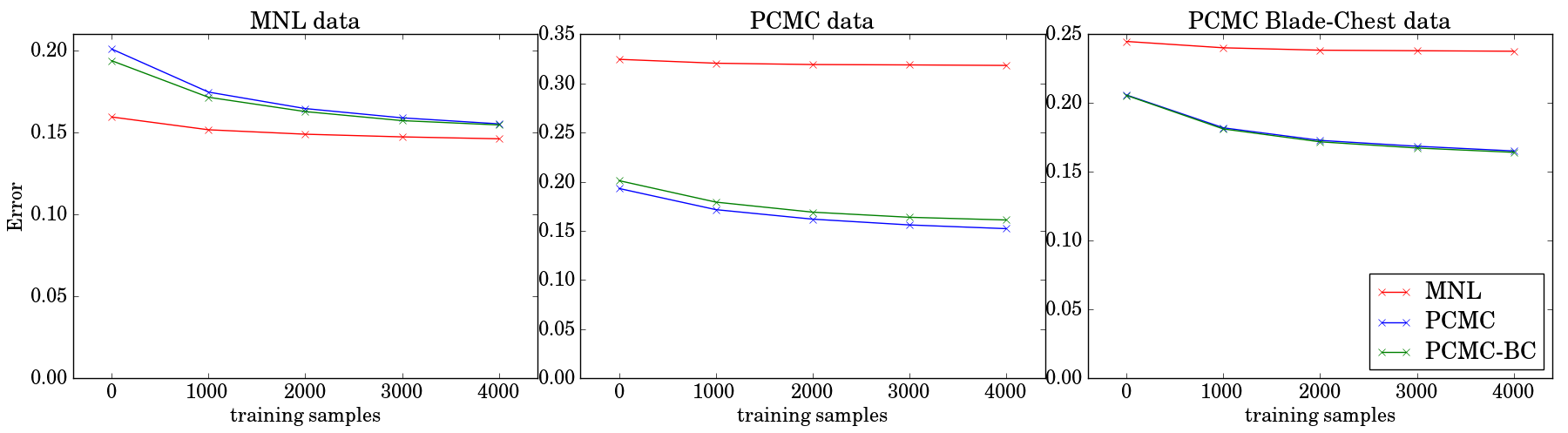}
\caption{Learning error for PCMC, I-LSR, and Blade-Chest PCMC on synthetic data generated from (i) Arbitrary $Q$, (ii) MNL, and (iii) Blade-Chest models. }
\label{fig:learn}
\end{figure*}


\section{Additional empirical data results and analysis}

\subsection{Optimization and smoothing}

The MNL models were trained using I-LSR, a specialized algorithm for training Multinomial Logit models. Meanwhile, the PCMC likelihood was optimized using SQSLP \cite{kraft1988software} while the Mixed MNL models were trained using L-BFGS-B \cite{byrd1995limited}, which are both general purpose optimization algorithms available as part of the {\tt scipy.optimize.minimize} software package. The reason for the different choices is that L-BFGS-B does not support the linear constraints that are part of the PCMC model likelihood. We choose to use L-BFGS-B for the MMNL model because it outperformed SQSLP on {\tt SFtravel} data, and we wanted to ensure that we were affording it the best possible chance to do well against the new model we contribute in this work.

The additive smoothing applied was $\alpha = 0.1$ for {\tt SFwork} and $\alpha = 5$ for {\tt SFshop}, where $\alpha$ is added to each $C_{iS}$ appearing in the likelihood function. The major motivator for the additive smoothing is that SQSLP occasionally goes awry on some of the permutations of the data, maintaining high error after a bad step. Even as currently formulated, the mean error improvement is somewhat underestimating the efficacy of the PCMC model, as a few bad runs out of 1000 will skew the distribution. Bad runs are easy to identify using cross-validation. Additionally, the implementation of SQSLP would try to compute numerical gradients through function evaluation at points outside the feasible space, which sometimes involved $q_{ij}+q_{ji}=0$, violating the irreducibility of the chain. Additive smoothing prevents these issues at the cost of some efficacy of the model. We find the improvement in error of the PCMC model to be even more significant in light of the optimization issues involved, and find that it reinforces development of PCMC training algorithms as an interesting research area. Refining the optimization routines for learning PCMC models should be seen as important future work.

\subsection{Empirical results for {\tt SFshop} data}
\begin{figure*}[t]
\centering
\includegraphics[scale=.315]{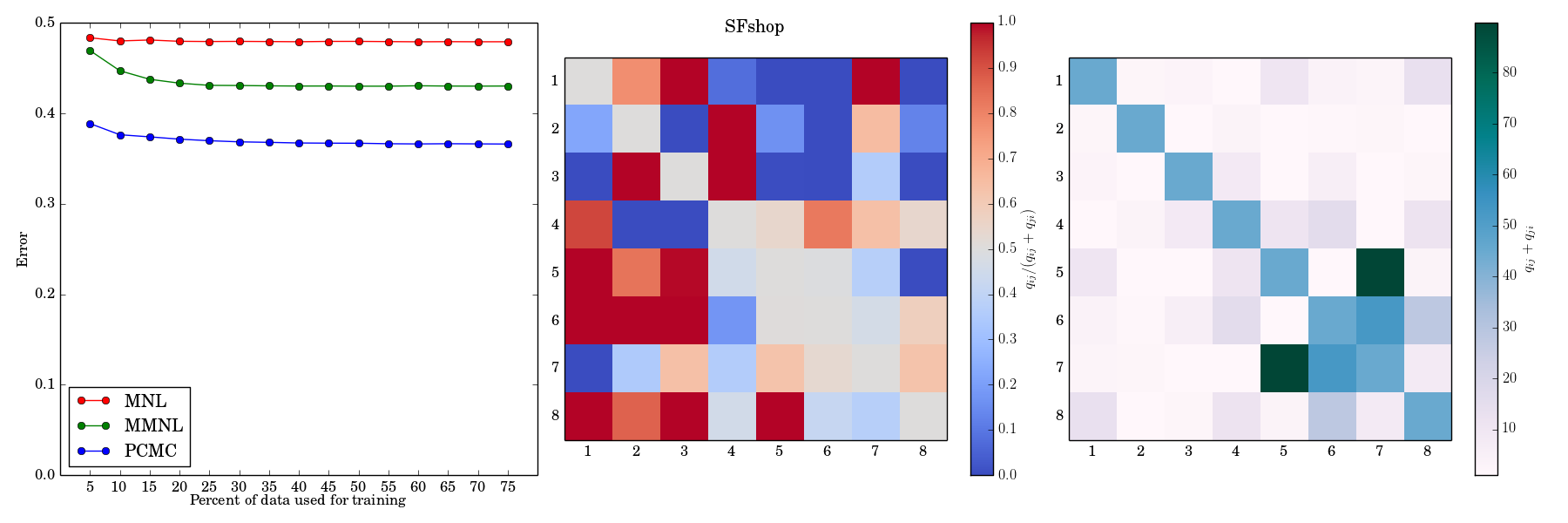}
\vspace{-5mm}
\caption{Prediction error on {\tt SFshop} data for the PCMC, MNL, and MMNL models.
There are improvements of 23.1\% and 13.9\% in prediction error over MNL and MMNL respectively when training on 75\% of the data.}
\vspace{-2mm}
\label{fig:shop}
\end{figure*}

Figure~\ref{fig:shop} analyzes the {\tt SFshop} dataset\footnote{May 2021: The version of Figure~\ref{fig:shop} that appeared in the NIPS 2016 conference proceedings incorrectly incorporated smoothing of the test data. Fixing this bug, the absolute performance of the models changes somewhat, but the relative performance and conclusions drawn remain unchanged.}, repeating the analysis of {\tt SFwork} found in the main paper.  The indexing of $\hat Q$ is again according to the estimated selection probabilities on the full set of alternatives, which were: (1) drive alone both directions, (2) share a ride with one person in both directions, (3) share a ride with one person in one direction and drive alone in the other direction,  (4) walk, (5) share a ride with more than one person in both directions, (6) share a ride with one person in one direction and more than one in the other direction, (7) bike, and (8) take public transit. 
The MMNL model here mixes $k=6$ models, giving it 48 parameters while PCMC has 56 and MNL has 8. When using the full training set, PCMC performs 31.3\%  better than MNL and  19.2\%  better than MMNL. 

\subsection{Inferred $\hat Q$ matrices}
The numerical values of the learned $\hat Q$ matrices, trained 100\% of the data, are given below.

\small
\[
\hat Q_{\text{work}}=
\begin{bmatrix}
  -3.875 & 2.314 & 0.557 & 0. & 0. & 1.004\\
  18.17 & -29.571 & 0.776 & 1.836 & 2.075 & 6.713\\
  4.84 & 7.752 & -35.994 & 1.042 & 14.476 & 7.884\\
  1. & 0.105 & 0.456 & -13.147 & 3.65 & 7.937\\
  21.201 & 9.108 & 3.323 & 7.363 & -47.7 & 6.704\\
  11.459 & 3.014 & 0.117 & 5.67 & 12.334 & -32.594\\
\end{bmatrix}
\]
\[
\hat Q_{\text{shop}}
=
\begin{bmatrix}
  -35.264 & 1. & 0. & 1. & 0. & 0. & 5.142 & 28.122\\
  0. & -12.959 & 3.363 & 0. & 0. & 2.03 & 2.433 & 5.133\\
  1.635 & 0. & -22.945 & 0.637 & 0.243 & 0. & 4.877 & 15.553\\
  0. & 12.73 & 5.95 & -24.455 & 2.174 & 0. & 1. & 2.601\\
  1. & 3.487 & 4.458 & 0.194 & -15.366 & 0. & 5.227 & 1.\\
  1. & 1.143 & 5.788 & 6.841 & 6.344 & -31.747 & 6.15 & 4.482\\
  1.331 & 1.305 & 0.136 & 0. & 0.226 & 0. & -30.693 & 27.695\\
  0. & 0. & 0.402 & 10.521 & 0. & 0. & 1.602 & -12.526\\
\end{bmatrix}
\]
\normalsize

\subsection{Count matrices}
Here we present data about the frequency with which pairs of alternatives appear in the same choice set. Matrices $A_{\text{work}}$ and $A_{\text{shop}}$ have as their $(i,j)$ entry the number of choice sets which contained both $i$ and $j$ in the {\tt SFwork} and {\tt SFshop} datasets respectively:
\small
\[
A_{\text{work}}=
\begin{bmatrix}
  - & 1323 & 4755 & 3729 & 1658 & 4755\\
  1323 & - & 1479 & 1395 & 797 & 1479\\
  4755 & 1479 & - & 4003 & 1738 & 5029\\
  3729 & 1395 & 4003 & - & 1611 & 4003\\
  1658 & 797 & 1738 & 1611 & - & 1738\\
  4755 & 1479 & 5029 & 4003 & 1738 & -\\
\end{bmatrix}
\]
\[
A_{\text{shop}}=
\begin{bmatrix}
  - & 3075 & 3075 & 3075 & 1844 & 3075 & 2069 & 2916\\
  3075 & - & 3157 & 3157 & 1908 & 3157 & 2136 & 2997\\
  3075 & 3157 & - & 3157 & 1908 & 3157 & 2136 & 2997\\
  3075 & 3157 & 3157 & - & 1908 & 3157 & 2136 & 2997\\
  1844 & 1908 & 1908 & 1908 & - & 1908 & 1908 & 1876\\
  3075 & 3157 & 3157 & 3157 & 1908 & - & 2136 & 2997\\
  2069 & 2136 & 2136 & 2136 & 1908 & 2136 & - & 2094\\
  2916 & 2997 & 2997 & 2997 & 1876 & 2997 & 2094 & -\\
\end{bmatrix}
\]
\normalsize

The relative frequencies of choice sets of different sizes in the two datasets are given in Table~\ref{my-label}.
\begin{table}[h]
\centering
\caption{ }
\label{my-label}
\begin{tabular}{c|cccccc}
        $|S| $ =         & 3&  4 &5 & 6 & 7 & 8 \\
                   \hline
{\tt SFwork} &  948  & 1918      & 1461      & 702       &    -         &   -    \\
{\tt SFshop} &   0     & 1           &   131      &  902      &        311   & 1812           
\end{tabular}
\end{table}


\end{document}